

\documentclass[conference,letterpaper]{IEEEtran}

\addtolength{\topmargin}{9mm}

%
%
\usepackage[utf8]{inputenc} 
\usepackage[T1]{fontenc}
\usepackage{url}
\usepackage{ifthen}
\usepackage{cite}
\usepackage[cmex10]{amsmath} 


\usepackage{color}
\usepackage{bm}
\usepackage{graphics}
\usepackage{mathtools}
\usepackage{amsmath,amssymb,amsfonts} 

\usepackage{booktabs} 
\usepackage{multirow}
\usepackage{hyperref}
\usepackage{mathtools}
\usepackage{comment}

\DeclarePairedDelimiter{\abs}{\lvert}{\rvert}

\DeclareMathOperator*{\argmin}{arg\,min}

\newtheorem{theorem}{Theorem}
\newtheorem{proof}{Proof}
\newtheorem{definition}{Definition}
\newcommand{\qed}{\hfill$\Box$}
\newtheorem{assumption}{Assumption}

\interdisplaylinepenalty=2500 

\hyphenation{op-tical net-works semi-conduc-tor}

\begin{document}
\title{Unbiased Estimation Equation under $f$-Separable Bregman Distortion Measures} 


\author{%
  \IEEEauthorblockN{Masahiro Kobayashi and Kazuho Watanabe}
  \IEEEauthorblockA{
                    Toyohashi University of Technology\\
                    Email: kobayashi@lisl.cs.tut.ac.jp and wkazuho@cs.tut.ac.jp}
}


\maketitle

\begin{abstract}
We discuss unbiased estimation equations in a class of objective function using a monotonically increasing function $f$ and Bregman divergence.
The choice of the function $f$ gives desirable properties such as robustness against outliers.
In order to obtain unbiased estimation equations, analytically intractable integrals are generally required as bias correction terms.
In this study, we clarify the combination of Bregman divergence, statistical model, and function $f$ in which the bias correction term vanishes.
Focusing on Mahalanobis and Itakura-Saito distances,
we provide a generalization of fundamental existing results and characterize
a class of distributions of positive reals with a scale parameter,
which includes the gamma distribution as a special case.
We discuss the possibility of latent bias minimization when the proportion of outliers is large, which is induced by the extinction of the bias correction term.
\end{abstract}

\section{Introduction}
The maximum likelihood estimation (MLE) for the statistical model $p(\bm{x}|\bm{\theta})$ estimates the parameter $\bm{\theta}$ by minimizing the negative log-likelihood.
It is equivalent to empirical inference under the Kullback-Leibler (KL)-divergence.
However, MLE is susceptible to outliers or mismatch of the assumed model.
In robust statistics, estimation methods weakening adverse effect of outliers have been studied \cite{robust}, \cite{huberbook}.
One of the most popular methods is M-estimation
which changes KL-divergence corresponding to MLE to robust divergences applicable to empirical inference.
These divergences are constructed through estimation equations by weighted (negative) score function $s(\bm{x},\bm{\theta})=\frac{\partial l(\bm{x},\bm{\theta})}{\partial \bm{\theta}}$, where $l(\bm{x}, \bm{\theta})=-\log p(\bm{x}|\bm{\theta})$.
The following two types of estimation equations are well known:
\begin{align}
\frac{1}{n}\sum_{i=1}^n \xi(l(\bm{x}_i,\bm{\theta}))s(\bm{x}_i,\bm{\theta})=  \mathbb{E}_{p(\bm{x}|\bm{\theta})}\left[\xi(l(\bm{x},\bm{\theta}))s(\bm{x},\bm{\theta})\right], \label{eq:ueq}\\
\frac{\sum_{i=1}^n \xi(l(\bm{x}_i,\bm{\theta}))s(\bm{x}_i,\bm{\theta})}{\sum_{j=1}^n \xi(l(\bm{x}_j,\bm{\theta}))}= \frac{\mathbb{E}_{p(\bm{x}|\bm{\theta})} \left[\xi(l(\bm{x},\bm{\theta}))s(\bm{x},\bm{\theta})\right]}{\mathbb{E}_{p(\bm{x}|\bm{\theta})}\left[ \xi(l(\bm{x},\bm{\theta}))\right]}, \label{eq:neq}
\end{align}
where $\xi:\mathbb{R}\to\mathbb{R}$ works as the weight function. 
Equation \eqref{eq:ueq} is called the {\it unnormalized estimation equation} because the summation of weights of score functions is not one.
This estimation equation is obtained from minimizing $\beta$-divergence (density power divergence), $U$-divergence, $\Psi$-divergence and so on \cite{beta-div, u-boost, psi-div, bexp-div}.
Equation \eqref{eq:neq} is called the {\it normalized estimation equation} because the summation of weights of score functions is one.
Windham proposed the estimator using density power weight in \eqref{eq:neq} \cite{windham}.
Then Jones et al. constructed corresponding divergence \cite{type0-div}.
It was proved that this divergence, named  $\gamma$-divergence, has the property that the latent bias can be minimized even when the proportion of outliers is large, and that the divergence with such a property is unique under some assumptions \cite{gamma-div}.
This property of $\gamma$-divergence was extended to the normalized estimation equation \eqref{eq:neq} with general weight $\xi$ \cite{fujisawa2013}.
However, these approaches require bias correction terms, that is, the right hand sides of \eqref{eq:ueq} and \eqref{eq:neq}, which in general result in analytically intractable integrals.

In this paper, we consider the M-estimation under $f$-separable distortion measures, which were proposed to extend linear distortion such as the average distortion to non-linear distortion, and for which 
the rate-distortion function
was studied 
\cite{f-separable}.
It was also applied 
to the estimation problem with Bregman divergence as the base distortion measure and a simple clustering  or vector quantization algorithm was constructed \cite{mypaper3}.
In this paper, we call this class of objective functions the $f$-separable Bregman distortion measure.
As will be discussed in Section \ref{sec:robust_div}, the M-estimation under this distortion measure can be viewed as deviance-based estimation of the regular exponential family model.
On one hand, unbiasedness of the estimation equation of deviance-based methods has been studied and some sufficient conditions for it have been obtained \cite{deviance-based},\cite{bianco2005}.
However, these results only apply to the case where the data-generating distribution is included in the assumed model.
On the other hand, the M-estimation of the location family is proved to have an unbiased estimation equation for general symmetric distributions \cite{huberbook}.
It is unknown in what cases of $f$-separable Bregman distortion measures the estimation equation is unbiased for such a general class of distributions.
If an estimation equation is unbiased, it can be regarded as normalized and the estimator has the potential to minimize the latent bias even if the proportion of outliers is large.

In this paper, we study the conditions for bias correction terms of $f$-separable Bregman distortion measures to vanish and characterize the combination of Bregman divergence, the statistical model, and the function $f$.
Focusing on Mahalanobis and Itakura-Saito (IS) distances, we specify the conditions for the general model classes and the function $f$ to achieve unbiased estimation equations.
Furthermore, we discuss if the latent bias can be minimized when the proportion of outlier is large.
We compare the M-estimation under the $f$-separable IS distortion measure with the estimation methods minimizing $\beta$ and $\gamma$ divergences in terms of asymptotic efficiency.
\section{$f$-separable Bregman distortion measures \label{sec:f-separable}}
In this section, we introduce the estimation method based on $f$-separable Bregman distortion measures \cite{mypaper3}.
We consider estimating the parameter $\bm{\theta}\in\bm{\Theta}\subseteq\mathbb{R}^d$ of a statistical model $p(\bm{x}|\bm{\theta})$ when given the data $\bm{x}^n=\{\bm{x}_1, \cdots, \bm{x}_n\}$, $\bm{x}_i=(x_i^{(1)},\cdots,x_i^{(d)})^{\rm T}\in\mathbb{R}^d$.
We assume that $p(\bm{x}|\bm{\theta}^*)$ is the data-generating distribution and the parameter $\bm{\theta}$ is the expected value of $\bm{x}$ under the model, that is, $\bm{\theta}=\mathbb{E}[\bm{X}]=\int \bm{x}p(\bm{x}|\bm{\theta})d\bm{x}$ if it exists.
The objective function \eqref{eq:f-separable} is defined by a differentiable and continuous monotonically increasing function $f : \mathbb{R}_+\to\mathbb{R}$ and Bregman divergence $d_\phi(\bm{x}, \bm{\theta}): \mathbb{R}^d \times \mathbb{R}^d \to \mathbb{R}_+$, where $\mathbb{R}_+$ is the set of non-negative real numbers.
\begin{align}
L_f(\bm{\theta})=\frac{1}{n}\sum_{i=1}^n f\left(d_\phi\left(\bm{x}_i, \bm{\theta}\right)\right) \label{eq:f-separable}
\end{align}
Bregman divergence is defined by a differentiable strictly convex function $\phi: \mathbb{R}^d \to \mathbb{R}$ as
\begin{align*}
d_\phi(\bm{x}, \bm{\theta}) \triangleq \phi(\bm{x}) - \phi(\bm{\theta}) - \langle \bm{x}-\bm{\theta}, \nabla\phi(\bm{\theta})\rangle ,
\end{align*}
where $\nabla \phi$ is its gradient vector and $\langle \cdot,\cdot\rangle$ is the inner product.
The estimator $\hat{\bm{\theta}}$ of the parameter $\bm{\theta}^*$ is given by the minimum solution of \eqref{eq:f-separable} as
\begin{align*}
\hat{\bm{\theta}} = \argmin_{\bm{\theta}} L_f(\bm{\theta}) .
\end{align*}
The corresponding estimation equation is given by
\begin{align}
\frac{1}{n}\sum_{i=1}^n f'(d_\phi(\bm{x}_i, \bm{\theta}))\frac{\partial}{\partial \bm{\theta}}d_\phi(\bm{x}_i, \bm{\theta}) = \bm{0}, \label{eq:f-sep_eq}
\end{align}
where $f'$ is the derivative of $f$.
This is not generally unbiased.
The property of the estimator depends on the function $f$.
For example, if the function $f$ is concave, the estimator is robust against outliers.
The original $f$-separable distortion measures are defined by $f$-mean with respect to some base distortion $d$ \cite{f-separable}.
From the view point of $f$-mean, representative examples are the log-sum-exp function and power mean, which are given by the following functions:
\begin{align}
f(z)&=\frac{1-\exp(-\alpha z)}{\alpha}, f'(z)=\exp(-\alpha z)\label{eq:lse} \\
f(z)&=\frac{(z+a)^\beta-1}{\beta}, f'(z)=(z+a)^{\beta-1}\quad (a\geq0), \label{eq:pow}
\end{align}
respectively, where if tuning parameters satisfy $\alpha>0$ or $\beta<1$, the estimators become robust.
When $\alpha=0$ and $\beta=1$, \eqref{eq:lse} and \eqref{eq:pow} become linear functions.

\section{Relation to robust divergences}
\label{sec:robust_div}

First, we show that the minimization of $L_f(\bm{\theta})$ is derived from deviance-based M-estimation of the expectation parameter under the regular exponential family,
\begin{align}
p(\bm{x}|\bm{\theta})=r_\phi(\bm{x}) \exp(-d_\phi(\bm{x},\bm{\theta})) , 
\label{eq:exp_fam}
\end{align}
where $r_\phi(\bm{x})$ is uniquely determined by the strictly convex function $\phi$ \cite{clustering-bregman}.
In fact, the deviance function \cite{deviance-based} of this model is
\begin{align*}
l(\bm{x},\bm{\theta})-\inf_{\bm{\theta}}l(\bm{x},\bm{\theta})= d_\phi(\bm{x},\bm{\theta})-\min_{\bm{\theta}}d_\phi(\bm{x},\bm{\theta})= d_\phi(\bm{x},\bm{\theta}).
\end{align*}

Next, we turn to empirical inference based on robust divergences under the regular exponential family \eqref{eq:exp_fam}.
The negative score function is given by $s(\bm{x}, \bm{\theta})=\frac{\partial }{\partial \bm{\theta}} d_\phi(\bm{x}, \bm{\theta})$.
Suppose for a moment that the bias correction term can be ignored.
In this case, the unnormalized estimation equation \eqref{eq:ueq} is given by
\begin{align}
\begin{split}
\frac{1}{n}\sum_{i=1}^n \xi\left(l(\bm{x}_i,\bm{\theta})\right)\frac{\partial }{\partial \bm{\theta}} d_\phi(\bm{x}_i, \bm{\theta})=\bm{0}.
\end{split} \label{eq:bphi}
\end{align}
Compared with this estimation equation,
the estimation equation \eqref{eq:f-sep_eq} of $f$-separable Bregman distortion measures can be interpreted as a weighted score function.
We focus on the arguments of  the weight functions of \eqref{eq:f-sep_eq} and \eqref{eq:bphi}.
The only difference is the term $\inf_{\bm{\theta}}l(\bm{x},\bm{\theta})=-\log r_\phi(\bm{x})$. Specifically, if the domain of the function $f'$ is extended to $(-\infty, \infty)$, the function $f'$ works identically to the weight function $\xi$.
In view of this relation, the function \eqref{eq:lse} associated with the log-sum-exp function yields the estimation methods that minimize $\beta$ and $\gamma$ divergences with the unnormalized and normalized estimation equations, \eqref{eq:ueq} and \eqref{eq:neq}, respectively.
In other words, when we assume the regular exponential family and the function \eqref{eq:lse}, then it is related to the estimation based on power of the statistical model.




While, in this section, we have assumed the bias correction term is exactly $\bm{0}$, it does not hold in general.
With the combination of the model and Bregman divergence discussed in the next section, the estimation equation \eqref{eq:f-sep_eq} becomes unbiased without any bias correction term for any function $f$ satisfying the condition given in the main theorems.
\section{Conditions for unbiased estimation equation}
In general, the estimator based on $f$-separable Bregman distortion measures introduced in Section \ref{sec:f-separable} does not satisfy consistency because its estimation equation is not necessarily unbiased.
In order to satisfy an unbiased estimation equation, we must subtract the bias correction term $b_f(\bm{\theta})$ from the objective function \eqref{eq:f-separable} as follows:
\begin{align*}
&L_f(\bm{\theta}) = \frac{1}{n}\sum_{i=1}^n f\left(d_\phi\left(\bm{x}_i, \bm{\theta}\right)\right) -b_f(\bm{\theta}), \\
&b_f(\bm{\theta})= -\int  \bm{\nabla}\bm{\nabla}\phi(\bm{\theta})\mathbb{E}_{p(\bm{x}|\bm{\theta})}\left[ f'\left(d_\phi\left(\bm{x},\bm{\theta}\right)\right)\left(\bm{x}-\bm{\theta}\right)\right]d\bm{\theta},
\end{align*}
where $\int \cdot d\bm{\theta}$ denotes the indefinite integral with respect to $\bm{\theta}$.
Then, the unnormalized estimation equation is given by
\begin{align*}
  \frac{1}{n}\sum_{i=1}^n f'\left(d_\phi\left(\bm{x}_i, \bm{\theta}\right)\right) \left(\bm{x}_i-\bm{\theta}\right) = \mathbb{E}_{p(\bm{x}|\bm{\theta})}[f'\left(d_\phi\left(\bm{x},\bm{\theta}\right)\right)\left(\bm{x}-\bm{\theta}\right)] .
\end{align*}
On the other hand, we can consider the normalized estimation equation as follows:
\footnotesize
\begin{align*}
\frac{\sum_{i=1}^n f'\left(d_\phi\left(\bm{x}_i, \bm{\theta}\right)\right) 
\left(\bm{x}_i-\bm{\theta}\right)}{\sum_{j=1}^n f'\left(d_\phi\left(\bm{x}_j, \bm{\theta}\right)\right) } = \frac{\mathbb{E}_{p(\bm{x}|\bm{\theta})}[f'\left(d_\phi\left(\bm{x},\bm{\theta}\right)\right)\left(\bm{x}-\bm{\theta}\right)]}{\mathbb{E}_{p(\bm{x}|\bm{\theta})}[f'\left(d_\phi\left(\bm{x},\bm{\theta}\right)\right)]} . 
\end{align*}
\normalsize
Fujisawa has elucidated that this estimation equation can possibly minimize the latent bias even when the proportion of outliers is large \cite{fujisawa2013}.
In both cases, it is necessary to calculate the integral for bias correction for each combination of statistical model, Bregman divergence, and the function $f$.
However, in many cases, the integral may not exist or be analytically intractable.
In this paper, we discuss the following estimation equation:
\begin{align*}
\frac{1}{n}\sum_{i=1}^n f'\left(d_\phi\left(\bm{x}_i, \bm{\theta}\right)\right)\left(\bm{x}_i-\bm{\theta}\right) = \bm{0} .
\end{align*}
That is, the bias correction term does not depend on the parameter $\bm{\theta}$.
In other words, the following equation is satisfied,
\begin{align}
\mathbb{E}_{p(\bm{x}|\bm{\theta})}\left[f'\left(d_\phi\left(\bm{x},\bm{\theta}\right)\right)\left(\bm{x}-\bm{\theta}\right)\right]=\bm{0}. \label{eq:bias0}
\end{align}
Then, this estimation equation is automatically normalized.
Therefore, the estimator has the possibility to minimize the latent bias even when the proportion of outliers is large.
In the rest of this section, we characterize  the combination of the statistical model $p(\bm{x}|\bm{\theta})$, Bregman divergence $d_\phi(\bm{x},\bm{\theta})$ and the function $f$ where the bias correction term vanishes.
Note that the statistical model considered hereafter is generally not the regular exponential family.

In particular, we focus on Mahalanobis and IS distances.
In the case of estimating the location parameter of elliptical distribution, it is known that the bias correction term vanishes and the estimator is consistent under certain conditions on the function $f$ \cite{huberbook}.
In the case of log-gamma regression model, it is known that the bias correction term vanishes.
This is equivalent to the case where IS distance is used and the model is the gamma distribution \cite{bianco2005}.
In this paper, we derive a simple condition of the function $f$ which induces unbiased estimation equation.
In particular, in the case of IS distance, the class of the model is extended to a more general class.

\subsection{Mahalanobis distance}
When the strictly convex function is given by $\phi(\bm{x})=\bm{x}^{\rm T}\bm{A}\bm{x}$, where $\bm{A}$ is a positive definite matrix.
Then the corresponding Bregman divergence is given by
\begin{align*}
d_{\rm Mah.}(\bm{x}, \bm{\theta}) \triangleq (\bm{x}-\bm{\theta})^{\rm T}\bm{A}(\bm{x}-\bm{\theta}). 
\end{align*}
If the positive definite matrix $\bm{A}$ is identity, Mahalanobis distance reduces to squared distance,
\begin{align*}
\|\bm{x}-\bm{\theta}\|^2 = \sum_{j=1}^d ({x}^{(j)}-{\theta}^{(j)})^2 .
\end{align*}
We assume that the statistical model is the elliptical distribution.
\begin{definition}[Elliptical distribution \cite{elliptically}]
For $\bm{x}\in\mathbb{R}^d$ and the parameter $\theta\in\Theta=\mathbb{R}^d$ and the function $g: \mathbb{R}_+\to\mathbb{R}_+$, let $C<\infty$ be the normalization constant, and the positive definite matrix $\bm{A}$ be the inverse of a fixed covariance matrix.
Then the elliptical distribution is defined by the following probability density function,
\begin{align}
p(\bm{x}|\bm{\theta})=\frac{1}{C}g((\bm{x}-\bm{\theta})^{\rm T}\bm{A}(\bm{x}-\bm{\theta})). \label{eq:sym_dis}
\end{align}
\end{definition}
This distribution includes Gaussian, Laplace, $t$ distributions and so on.
\begin{theorem}\label{thm:mahalanobis}
If the following condition holds against the combination of the function $f$ and the statistical model \eqref{eq:sym_dis}, the estimation equation holds without a bias correction term:
\begin{align*}
\int_0^\infty g(t)f'(t)t^{\frac{d-1}{2}}dt <\infty .
\end{align*}
\end{theorem}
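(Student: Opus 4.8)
The plan is to reduce the claim, which is precisely \eqref{eq:bias0} with $d_\phi = d_{\rm Mah.}$ and $p(\bm{x}|\bm{\theta})$ the elliptical density \eqref{eq:sym_dis}, to an odd-symmetry argument after a linear change of variables. The stated integrability condition will enter only to guarantee that the integral in question converges absolutely, so that the cancellation is legitimate and the expectation in \eqref{eq:bias0} is well-defined in the first place.

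First I would substitute $\bm{y} = \bm{x} - \bm{\theta}$. Since $d_{\rm Mah.}(\bm{x},\bm{\theta}) = \bm{y}^{\rm T}\bm{A}\bm{y}$ and $p(\bm{x}|\bm{\theta}) = \frac{1}{C} g(\bm{y}^{\rm T}\bm{A}\bm{y})$, the left-hand side of \eqref{eq:bias0} becomes $\frac{1}{C}\int_{\mathbb{R}^d} g(\bm{y}^{\rm T}\bm{A}\bm{y}) f'(\bm{y}^{\rm T}\bm{A}\bm{y})\, \bm{y}\, d\bm{y}$. The scalar weight $g(\bm{y}^{\rm T}\bm{A}\bm{y}) f'(\bm{y}^{\rm T}\bm{A}\bm{y})$ is invariant under $\bm{y} \mapsto -\bm{y}$, whereas $\bm{y}$ changes sign, so the integrand is odd; hence, as soon as the integral is absolutely convergent, it equals minus itself and must be $\bm{0}$.

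Next I would establish absolute convergence from the hypothesis. Writing $\bm{A} = \bm{B}^{\rm T}\bm{B}$ with $\bm{B}$ invertible (for instance the positive definite square root), the substitution $\bm{z} = \bm{B}\bm{y}$ gives $\bm{y}^{\rm T}\bm{A}\bm{y} = \|\bm{z}\|^2$, $\bm{y} = \bm{B}^{-1}\bm{z}$ and $d\bm{y} = |\det\bm{B}|^{-1} d\bm{z}$; then spherical coordinates $\bm{z} = \rho\bm{\omega}$, $\rho \geq 0$, $\bm{\omega} \in S^{d-1}$, together with $\|\bm{B}^{-1}\bm{z}\| \asymp \rho$, show that the integral of the absolute value of the integrand is a finite constant times $\int_0^\infty g(\rho^2) f'(\rho^2)\, \rho\cdot\rho^{d-1}\, d\rho = \tfrac12 \int_0^\infty g(t) f'(t)\, t^{\frac{d-1}{2}}\, dt$, where I substituted $t = \rho^2$ and used $g \geq 0$ (it generates a probability density) and $f' \geq 0$ (monotonicity of $f$), so that no absolute values are needed. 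This is exactly the quantity assumed finite, which completes the argument. Equivalently, one may conclude directly in these coordinates: the radial integral factors out and the remaining angular factor $\int_{S^{d-1}} \bm{\omega}\, d\bm{\omega}$ vanishes by symmetry of the sphere, the linear factor $\bm{B}^{-1}$ passing through the integral.

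I do not anticipate a genuine difficulty: the proof is a change of variables plus a parity observation. The only points needing care are checking that the linear map $\bm{B}^{-1}$ does not destroy the parity structure (it does not, being linear and hence odd), and being explicit that monotonicity of $f$ forces $f' \geq 0$, so that the signed hypothesis integral coincides with its absolute-value version; this is what makes the stated condition simultaneously ensure that the expectation exists and that the symmetry cancellation applies.
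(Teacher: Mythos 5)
Your proof is correct and follows essentially the same route as the paper's: a linear change of variables that reduces the Mahalanobis form to a spherically symmetric one, a symmetry argument that kills the angular part (your odd-parity observation is the same mechanism as the paper's use of the stochastic representation $\bm{Y}=R\bm{U}\sqrt{\bm{\Lambda}}$ with $\mathbb{E}[\bm{U}]=\bm{0}$), and the identification of the radial integral $\tfrac12\int_0^\infty g(t)f'(t)t^{\frac{d-1}{2}}dt$ as the quantity whose finiteness is required. If anything, you are more explicit than the paper in noting that $f'\geq 0$ and $g\geq 0$ make the hypothesis an absolute-convergence condition, which is what legitimizes the cancellation.
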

Although the unbiased estimation equation in this case is intuitively trivial because of the symmetry around $\bm{\theta}$ and has been pointed out in the literature \cite{huberbook}, the explicit condition for the unbiasedness has never been discussed.


\subsection{IS distance\label{sec:itakura}}
When the strictly convex function is given by $\phi(x)=-\log x$, then the corresponding Bregman divergence is given by
\begin{align}
d_{\rm IS}(x,\theta)\triangleq\frac{x}{\theta}-\log\frac{x}{\theta}-1. \label{eq:IS} 
\end{align}
\begin{definition}[IS distribution]
For $x\in \mathbb{R}_+$ and the scale parameter $\theta\in\Theta=\mathbb{R}_+\setminus\{0\}$, and the function $g: \mathbb{R}_+\to\mathbb{R}_+$, we define the following probability density function with the normalization constant $C<\infty$,
\begin{align}
p(x|\theta)=\frac{1}{C}\frac{1}{x}g(d_{\rm IS}(x,\theta)). \label{eq:is_dist}
\end{align}
\end{definition}
When the expectation exists, the scale parameter also coincides with the expectation.
In particular, if $g(z)=\exp(-kz)$, the IS distribution reduces to the gamma distribution $p(x|\theta)=\left(\frac{k}{\theta}\right)^k\frac{1}{\Gamma(k)}  x^{k-1}\exp\left(-\frac{k}{\theta}x\right)$ with the known shape parameter $k>0$.
Details of the IS distribution are described in Appendix \ref{sec:is_dist}.
\begin{theorem}\label{thm:itakura}
If the following condition holds against the combination of the function $f$ and statistical model \eqref{eq:is_dist}, the estimation equation holds without a bias correction term:
\begin{align}
\int_0^\infty g(t)f'(t)dt <\infty \label{eq:is_bias0}
\end{align}
\end{theorem}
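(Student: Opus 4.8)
The plan is to reduce the claim to the vanishing of the bias correction term, i.e. to \eqref{eq:bias0} specialized to the one-dimensional model \eqref{eq:is_dist}, and then to evaluate that integral by a change of variables tailored to the geometry of $d_{\rm IS}$. First I would note that for the IS divergence \eqref{eq:IS} one has $\frac{\partial}{\partial\theta}d_{\rm IS}(x,\theta)=-(x-\theta)/\theta^{2}$; since the factor $-1/\theta^{2}$ is nonzero it can be dropped, so the estimation equation \eqref{eq:f-sep_eq} is equivalent to $\frac{1}{n}\sum_{i}f'(d_{\rm IS}(x_{i},\theta))(x_{i}-\theta)=0$, and the absence of a bias correction term is exactly the statement
\begin{align*}
& \mathbb{E}_{p(x|\theta)}\bigl[f'(d_{\rm IS}(x,\theta))(x-\theta)\bigr] \\
& = \frac{1}{C}\int_{0}^{\infty}f'(d_{\rm IS}(x,\theta))\,g(d_{\rm IS}(x,\theta))\Bigl(1-\tfrac{\theta}{x}\Bigr)dx = 0 .
\end{align*}

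Next I would substitute $u=x/\theta$. Since $d_{\rm IS}(x,\theta)=t(u)$ with $t(u):=u-\log u-1$ and $\frac{1}{x}dx=\frac{1}{u}du$, the integral becomes $\frac{\theta}{C}\int_{0}^{\infty}f'(t(u))\,g(t(u))\,(1-\tfrac{1}{u})\,du$ (the same substitution in $C=\int_{0}^{\infty}\frac{1}{x}g(d_{\rm IS})\,dx$ shows $C$ is independent of $\theta$, so no $\theta$-dependent correction is even possible). The key observation is that $t'(u)=1-1/u$, so, writing $h:=f'g$ — a nonnegative function, since $f$ is increasing and $g\ge 0$ — the integrand is precisely $h(t(u))\,t'(u)$, an exact differential.

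Then I would use that $t(u)=u-\log u-1$ attains its unique minimum $t(1)=0$, is a strictly decreasing bijection of $(0,1)$ onto $(0,\infty)$, and a strictly increasing bijection of $(1,\infty)$ onto $(0,\infty)$. Splitting $\int_{0}^{\infty}h(t(u))t'(u)\,du$ at $u=1$ and changing variables $s=t(u)$ on each piece gives $\int_{0}^{1}h(t(u))t'(u)\,du=-\int_{0}^{\infty}h(s)\,ds$ and $\int_{1}^{\infty}h(t(u))t'(u)\,du=\int_{0}^{\infty}h(s)\,ds$, whose sum is $0$. Running the same computation with $|1-\theta/x|$ in place of $1-\theta/x$ yields $\frac{2\theta}{C}\int_{0}^{\infty}h(s)\,ds$, which is finite precisely under the hypothesis $\int_{0}^{\infty}g(t)f'(t)\,dt<\infty$ of \eqref{eq:is_bias0}; this both legitimizes the splitting and the substitutions and guarantees the cancellation is between finite quantities, completing the proof.

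I expect the only real obstacle to be the routine but necessary care at the improper endpoints $u\to 0^{+}$, $u\to\infty$ and at the matching point $u=1$ where $t'$ vanishes: one must verify that $t$ restricted to each half-line $(0,1)$ and $(1,\infty)$ is a genuine $C^{1}$ diffeomorphism onto $(0,\infty)$, so that the change of variables $s=t(u)$ is valid, and confirm that \eqref{eq:is_bias0} is exactly the integrability condition this requires. Once that is in place the identity is immediate; I would also expect the Mahalanobis case, Theorem~\ref{thm:mahalanobis}, to follow the same template — pass to the radial variable, recognize the density weight times the score as an exact differential, and exploit the symmetry of that differential about the minimum of $d_\phi$.
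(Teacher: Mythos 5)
Your proof is correct and follows essentially the same route as the paper: split the integral at the minimizer $x=\theta$ (your $u=1$) and substitute $t=d_{\rm IS}(x,\theta)$ on each half, so the two pieces cancel, with the hypothesis $\int_0^\infty g(t)f'(t)\,dt<\infty$ guaranteeing the cancellation is between finite quantities. Your extra care with absolute integrability and the monotonicity of $t(u)=u-\log u-1$ on each half-line is a welcome tightening of the paper's terser argument, but not a different method.
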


\subsubsection{Example: Gamma distribution}
In the case of the function \eqref{eq:lse} and gamma distribution with the known shape parameter $k>0$, that is, $g(z)=\exp(-kz)$, then the integral in \eqref{eq:is_bias0} becomes as follows:
\begin{align*}
\int_0^\infty \exp(-kz)\exp(-\alpha z)dz = \int_0^\infty \exp(-(k+\alpha)z) dz.
\end{align*}
Therefore, the condition $\alpha>-k$ must be satisfied for the integral to be bounded.
In other words, the lower limit of $\alpha$ that satisfies the unbiased estimation equation differs for each shape parameter $k$.
Since $k>0$, we can see that the condition of Theorem \ref{thm:itakura} is satisfied if $\alpha>0$, for which the estimator is robust against outliers.

In the case of the function \eqref{eq:pow} and the gamma distribution with the known shape parameter $k>0$, then the integral in \eqref{eq:is_bias0} becomes as follows:
\begin{align*}
\int_0^\infty \exp(-kz)(z+a)^{\beta-1}dz.
\end{align*}
When $a>0$, the condition of Theorem \eqref{eq:is_bias0} holds for $\beta<\infty$.
When $a=0$, the condition of Theorem \eqref{eq:is_bias0} holds for $0< \beta <\infty$.
However, it does not hold for $\beta\leq0$.

\subsection{Discussion: other Bregman divergence}
When the dimension is one, the conditions of Theorems \ref{thm:mahalanobis} and \ref{thm:itakura} are the  same.
A common point is that the statistical model is expressed by Bregman divergence used for estimation.
Hence, the results of Theorems \ref{thm:mahalanobis} and \ref{thm:itakura} can be generalized to a wider class of continuous distributions written by Bregman divergence.
We refer for the details of the continuous Bregman distribution and its theorem to Appendix \ref{sec:CBregman_dist}.
The elliptical and IS distributions are rare examples which have unbiased estimation equations for the corresponding $f$-separable Bregman distortion measures and include the corresponding regular exponential family models. 

\section{Latent bias}
In this section, we discuss the possibility of the latent bias minimization when the proportion of outliers is large.
It is induced by the vanishing bias correction term.
From the view point of the normalized estimation equation, the condition of latent bias minimization was shown as a theorem \cite{fujisawa2013}, whereas generally its condition is difficult to be examined.
However, it can be easily discussed as $\gamma$-divergence when the bias correction term vanishes.
The definitions of outliers are different for $f$-separable distortion measures and $\gamma$-divergence.
We obtain, as a by-product, a solution to a drawback of $\gamma$-divergence.

\subsection{Contaminated distribution}
We assume that the data-generating distribution is given as follows:
\begin{align*}
\tilde{p}(\bm{x})=(1-\varepsilon)p(\bm{x}|\bm{\theta}^*)+\varepsilon c(\bm{x}),
\end{align*}
where $p(\bm{x}|\bm{\theta}^*)$ is the target distribution and $c(\bm{x})$ is the contamination distribution which generates outliers and $\varepsilon$ is the proportion of outliers.
Suppose the parameter $\hat{\bm{\theta}}$ estimated from the data generated from this distribution is expressed asymptotically as $\tilde{\bm{\theta}}$.
That is, $\hat{\bm{\theta}} \xrightarrow{P} \tilde{\bm{\theta}}$.
Here, $\tilde{\bm{\theta}}-\bm{\theta}^*$ is called the latent bias, which expresses the bias caused by the contamination distribution \cite{fujisawa2013}.

\subsection{$\gamma$-divergence}
In the estimation based on $\gamma$-divergence, it is assumed that the following quantity can be made arbitrarily small by adjusting $\gamma_0> 0$ as an assumption regarding outliers,
\begin{align}
\nu_{p} = \left[ \mathbb{E}_{c(\bm{x})}\left[p(\bm{x}|\bm{\theta}^*)^{\gamma_0}\right]\right]^\frac{1}{\gamma_0}. \label{eq:gamma-assump}
\end{align}
This assumption means that outliers are distributed over the region where the likelihood is small in the target distribution $p(\bm{x}|\bm{\theta}^*)$.
Since nothing about the outlier proportion is assumed, it is also possible to deal with the case where the outlier proportion is large.
Kuchibhotla et al. reported $\gamma$-divergence is adversely affected by data at the edge of the support of the target model \cite{bridge-div}.
For example, in the estimation of the scale parameter of the exponential distribution, a wrong global solution is generated when very small inlier around $x=0$ such as $x=10^{-4}$ is mixed.
Recently, a solution to this problem has been invented, whereas it is not fully resolved \cite{bridge-div}.

\subsection{$f$-separable Bregman distortion measures}
In the estimation based on $f$-separable Bregman distortion measures, we assume that the following quantity can be made arbitrarily small by adjusting the function $f$ as an assumption regarding outliers,
\begin{align}
\nu_{d_\phi} =\mathbb{E}_{c(\bm{x})}\left[f\left(d_\phi\left(\bm{x}, \bm{\theta}^*\right)\right) \right], \label{eq:f-assump}
\end{align}
under Assumption \ref{ass:f} described later.
This assumption is corresponding to the assumption \eqref{eq:gamma-assump} of $\gamma$-divergence and means that when the random variable follows the contamination distribution, that is, $\bm{X} \sim c(\bm{X})$, an outlier is in the region where $d_\phi(\bm{X},\bm{\theta}^*)\to\infty$ is satisfied.
When estimating the location parameter of the elliptical distribution using Mahalanobis distance, the definition of outlier is same as \eqref{eq:gamma-assump}.
That is, $\bm{x}$ with $\|\bm{x}\|\to\infty$ is regarded as the outlier.
However, when estimating the scale parameter of the IS distribution using IS distance, the definition of outlier is not same as \eqref{eq:gamma-assump}.
In this case, from
\begin{align*}
\lim_{x\to0}d_{\rm IS}(x,\theta)=\lim_{x\to\infty}d_{\rm IS}(x,\theta)=\infty,
\end{align*}
the data near $0$ or $\infty$ are regarded as outliers.
In other words, the estimator based on $f$-separable IS distortion measures is robust against large outliers and very small inliers to which $\gamma$-divergence is vulnerable.

\subsection{Condition of function $f$}
In the following, we identify the function $f(z)$ with $f(z)+ {\rm constant}$, because the estimator depends only on the derivative of the function $f$.
\begin{assumption}\label{ass:f}
$\forall z\in \mathbb{R}_+, \; |f(z)| < \infty$
and 
$\displaystyle \lim_{z\to\infty}f(z)=0$
\end{assumption}
\begin{assumption} \label{ass:select_f}
Under Assumption \ref{ass:f}, \eqref{eq:f-assump} can be made arbitrarily small by adjusting the function $f$.
\end{assumption}
\begin{assumption} \label{ass:cons}
When $\epsilon=0$, the estimator $\tilde{\bm{\theta}}$ is a consistent estimator, that is, $\tilde{\bm{\theta}}=\bm{\theta}^*$.
\end{assumption}
\begin{theorem}\label{thm:latent_bias}
Under Assumptions \ref{ass:f}-\ref{ass:cons}, the latent bias can be made arbitrarily small by adjusting  the function $f$.
\end{theorem}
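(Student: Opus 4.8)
The plan is to pass to the population (infinite-sample) limit and show that, for a suitably chosen $f$, the contamination perturbs the minimized functional only negligibly. By a uniform law of large numbers the estimator $\tilde{\bm{\theta}}$ minimizes
\begin{align*}
\bar{L}_f(\bm{\theta}) \triangleq \mathbb{E}_{\tilde{p}(\bm{x})}\!\left[f(d_\phi(\bm{x},\bm{\theta}))\right] = (1-\varepsilon)\,\bar{L}_f^{0}(\bm{\theta}) + \varepsilon\, h(\bm{\theta}),
\end{align*}
where $\bar{L}_f^{0}(\bm{\theta}) \triangleq \mathbb{E}_{p(\bm{x}|\bm{\theta}^*)}[f(d_\phi(\bm{x},\bm{\theta}))]$ and $h(\bm{\theta}) \triangleq \mathbb{E}_{c(\bm{x})}[f(d_\phi(\bm{x},\bm{\theta}))]$, so that $h(\bm{\theta}^*)=\nu_{d_\phi}$. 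First I would record two elementary facts. By Assumption \ref{ass:cons} (which is ensured by Theorems \ref{thm:mahalanobis}--\ref{thm:itakura}), the clean functional $\bar{L}_f^{0}$ attains its minimum uniquely at $\bm{\theta}^*$. Since $f$ is increasing with $\lim_{z\to\infty}f(z)=0$ (Assumption \ref{ass:f}), one has $f(0)\le f(z)\le 0$, hence $|h(\bm{\theta})|\le|f(0)|<\infty$ for every $\bm{\theta}$, so $h$ is well defined and bounded.

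Next I would exploit the minimality of $\tilde{\bm{\theta}}$: from $(1-\varepsilon)\bar{L}_f^{0}(\tilde{\bm{\theta}}) + \varepsilon h(\tilde{\bm{\theta}}) \le (1-\varepsilon)\bar{L}_f^{0}(\bm{\theta}^*) + \varepsilon h(\bm{\theta}^*)$ one obtains
\begin{align*}
0 \le \bar{L}_f^{0}(\tilde{\bm{\theta}}) - \bar{L}_f^{0}(\bm{\theta}^*) \le \frac{\varepsilon}{1-\varepsilon}\Big(\nu_{d_\phi} + \big|h(\tilde{\bm{\theta}})\big|\Big).
\end{align*}
The term $\nu_{d_\phi}$ is driven to zero by Assumption \ref{ass:select_f}. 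For $\big|h(\tilde{\bm{\theta}})\big| = \mathbb{E}_{c(\bm{x})}\big[|f(d_\phi(\bm{x},\tilde{\bm{\theta}}))|\big]$ I would use that an $f$ achieving small $\nu_{d_\phi}$ under Assumption \ref{ass:f} is essentially flat (equal to $0$) outside a neighbourhood of $z=0$, together with the structural property emphasised in Section \ref{sec:itakura}: the outlier mass of $c(\bm{x})$ concentrates where $d_\phi(\bm{x},\bm{\theta}^*)\to\infty$, and for Mahalanobis and IS distances this forces $d_\phi(\bm{x},\bm{\theta})\to\infty$ uniformly over $\bm{\theta}$ in any compact $K\subseteq\bm{\Theta}$ bounded away from the boundary. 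Hence the same $f$ makes $\sup_{\bm{\theta}\in K}|h(\bm{\theta})|$ small, and once $\tilde{\bm{\theta}}\in K$ the right-hand side above tends to $0$. Then continuity and the well-separatedness of the minimum of $\bar{L}_f^{0}$ at $\bm{\theta}^*$ (inherited from Assumption \ref{ass:cons}) give $\bar{L}_f^{0}(\tilde{\bm{\theta}})\to\bar{L}_f^{0}(\bm{\theta}^*)\Rightarrow\tilde{\bm{\theta}}\to\bm{\theta}^*$, i.e.\ the latent bias $\tilde{\bm{\theta}}-\bm{\theta}^*$ is made arbitrarily small.

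I expect the main obstacle to be exactly the step that guarantees $\tilde{\bm{\theta}}$ stays in such a compact $K$, and more generally the uniform-in-$\bm{\theta}$ control of $h$ near $\bm{\theta}^*$: Assumption \ref{ass:select_f} only gives smallness of $h$ at the single point $\bm{\theta}^*$, and in the large-$\varepsilon$ regime the crude bound $|h(\tilde{\bm{\theta}})|\le|f(0)|$ is too weak to localise $\tilde{\bm{\theta}}$ by itself, so one must upgrade the "outlier $\Leftrightarrow d_\phi\to\infty$'' interpretation to hold on a whole neighbourhood (or bootstrap from a first, coarse confinement that uses the geometry of the model — e.g.\ that for the IS distribution $\tilde\theta$ cannot approach $0$ or $\infty$). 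This is the analogue of the delicate estimates underlying the latent-bias analysis of $\gamma$-divergence in \cite{fujisawa2013},\cite{gamma-div}, and the argument above is essentially its transcription to the additive, non-logarithmic $f$-separable setting.
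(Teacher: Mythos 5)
Your proposal follows essentially the same route as the paper's own proof: decompose the population objective under $\tilde{p}(\bm{x})$ into the clean part $(1-\varepsilon)\mathbb{E}_{p(\bm{x}|\bm{\theta}^*)}[f(d_\phi(\bm{x},\bm{\theta}))]$ and the contamination part, argue the latter is $O(\varepsilon\nu_{d_\phi})$ and hence negligible under Assumptions \ref{ass:f}--\ref{ass:select_f}, and conclude $\tilde{\bm{\theta}}\approx\bm{\theta}^*$ from Assumption \ref{ass:cons}. The additional machinery you supply (the basic inequality from minimality of $\tilde{\bm{\theta}}$, well-separatedness of the clean minimum) and the obstacle you flag --- that $\nu_{d_\phi}$ controls the contamination functional only at the single point $\bm{\theta}^*$ rather than uniformly near $\tilde{\bm{\theta}}$ --- is precisely the step the paper's proof elides by writing ``$\bm{\theta}\approx\bm{\theta}^*$'' and then ignoring $O(\varepsilon\nu_{d_\phi})$, so your version is, if anything, more careful than the published argument.
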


\subsubsection{Example}
We consider the function \eqref{eq:lse}, which is identified with $\exp(-\alpha z)$.
Then Assumption \ref{ass:f} holds immediately.
Assumption \ref{ass:select_f} follows from Lyapunov's inequality with sufficiently large $\alpha$.
Assumption \ref{ass:cons} depends on the target distribution.
In the case of the gamma distribution, we can prove the consistency of the estimator \cite{deviance-based}.


\section{Asymptotic property}
The estimation based on $f$-separable Bregman distortion measures, which satisfies the unbiasedness of estimation equation, can be interpreted as an M-estimation.
Therefore, under appropriate assumptions, the following consistency and asymptotic normality of the estimator follow from the asymptotic theory of M-estimation \cite{robust,huberbook,asympt},
\begin{align*}
\hat{\bm{\theta}}\xrightarrow{P}\bm{\theta}^*,
\sqrt{n}\left(\hat{\bm{\theta}}-\bm{\theta}^*\right)\xrightarrow{d}N(\bm{0}, \Sigma(\bm{\theta}^*)),
\end{align*}
where $\Sigma(\bm{\theta}^*)=\bm{J}^{-1}(\bm{\theta}^*)\bm{I}(\bm{\theta}^*)\bm{J}^{-1}(\bm{\theta}^*)$,
\begin{align*}
&\bm{I}(\bm{\theta})=\mathbb{E}_{p(\bm{x}|\bm{\theta})}\left[[f'\left(d_\phi\left(\bm{x},\bm{\theta}\right)\right)]^2\left(\bm{x}-\bm{\theta}\right)\left(\bm{x}-\bm{\theta}\right)^{\rm T}\right], \\
&\bm{J}(\bm{\theta})=\mathbb{E}_{p(\bm{x}|\bm{\theta})}\left[\frac{\partial f'\left(d_\phi\left(\bm{x},\bm{\theta}\right)\right)\left(\bm{x}-\bm{\theta}\right)}{\partial \bm{\theta}}\right].
\end{align*}
If the proportion of outliers is large, the asymptotic variance is given by the technique in \cite{gamma-div}, \cite{fujisawa2013}.

\subsection{Gamma distribution}
We assume that the statistical model is the gamma distribution $p(x|\theta)=\left(\frac{k}{\theta}\right)^k\frac{1}{\Gamma(k)}  x^{k-1}\exp\left(-\frac{k}{\theta}x\right)$, the function $f$ is \eqref{eq:lse} and Bregman divergence is IS distance \eqref{eq:IS}, then the asymptotic variance of the estimator is given by
\begin{align*}
V[\hat{\theta}]=\Sigma(\theta^{*})=\frac{\Gamma(2\alpha+k)\Gamma(k)}{[\Gamma(\alpha+k)]^2}\frac{(\alpha+k)^{2(\alpha+1+k)}}{(2\alpha+k)^{2\alpha+1+k}}\frac{1}{k^{2+k}}\theta^{*2},
\end{align*}
where $\Gamma(\cdot)$ is the gamma function and the tuning parameter satisfies $\alpha >-0.5k$.
In the case of the exponential distribution ($k=1$), we can compare the asymptotic relative efficiencies (AREs) of the estimators based on minimizing $f$-separable IS distortion measures and $\beta$ and $\gamma$ divergences.
The ARE is given by $\frac{V[\hat{\theta}_{\rm MLE}]}{V[\hat{\theta}]}$,
where $V[\hat{\theta}_{\rm MLE}]$ is the asymptotic variance of the maximum likelihood estimator ($\alpha=0$).
In the case of the exponential distribution, the asymptotic variance of the estimators based on $\beta$ and $\gamma$ divergence were derived respectively \cite{beta-div}, \cite{type0-div}.
Figure \ref{fig:ARE} shows their AREs, when the tuning parameter $\alpha = \beta = \gamma$.
We notice that the range of tuning parameter $\alpha = \beta = \gamma >0$ induces the robustness against outliers.
From Figure \ref{fig:ARE}, for the function \eqref{eq:lse} and IS distance, the ARE is generally greater than that of $\beta$-divergence in the range of tuning parameter $\alpha<2$.
The ARE is also greater than that of $\gamma$-divergence in the entire range of the tuning parameter.
However, in general, the ARE and robustness have trade-off relationship.
Hence, it is important to choose the tuning parameter appropriately taking into account both of them.

\begin{figure}[tb]
\begin{center}
  \includegraphics[width=0.4\textwidth]{./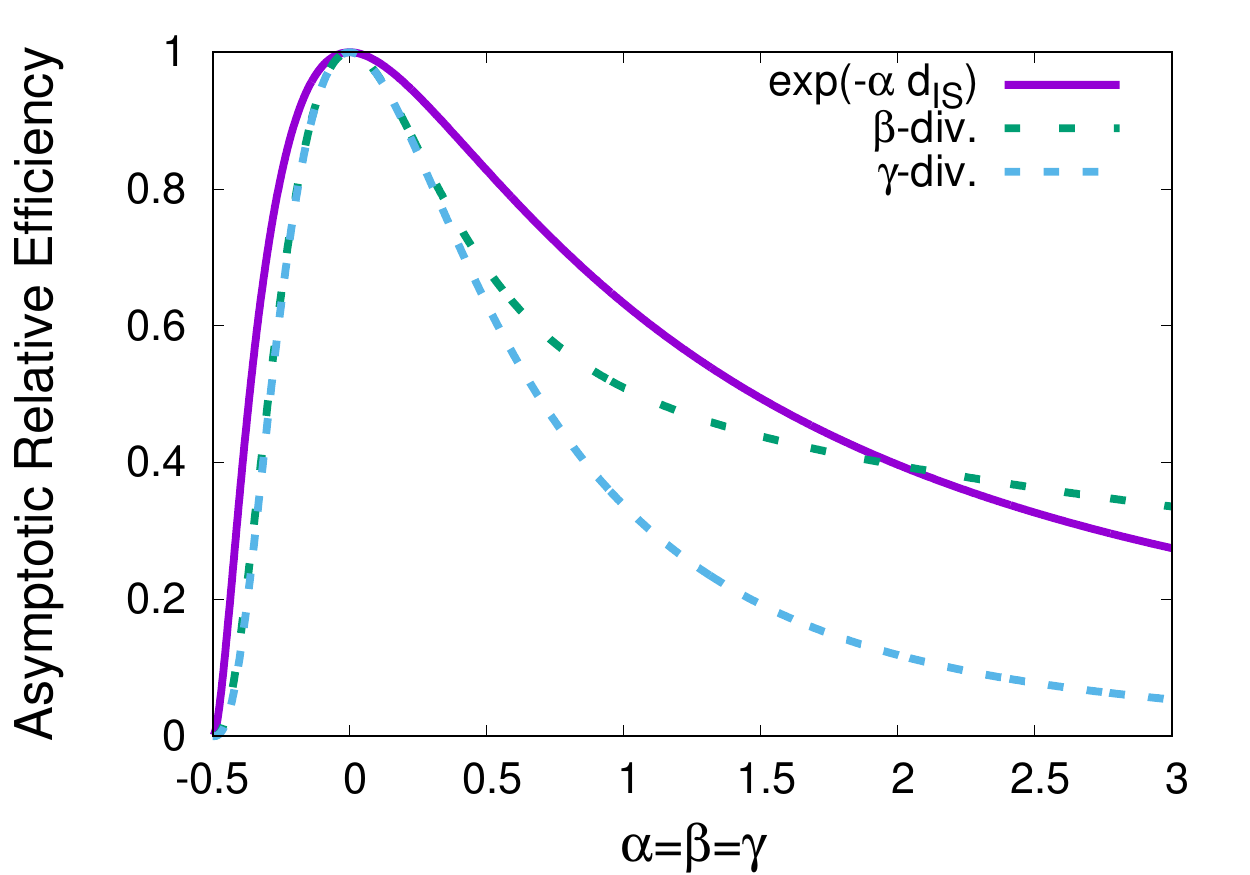}
\end{center}
\caption{Comparison of asymptotic relative efficiency under the exponential model ($k=1$).}
\label{fig:ARE}
\vspace{-10pt}
\end{figure}
\section{Conclusion}
In this paper, we discussed the condition for the unbiased estimation equation in the class of parameter estimation by minimizing $f$-separable Bregman distortion measures.
Its condition consists of the statistical model, Bregman divergence and the function $f$.
We clarified in the cases of Mahalanobis and IS distances that the condition the function $f$ and the statistical model should satisfy is characterized by a simple integral.
In the parameter estimation of the scale parameter of the gamma distribution, divergence-based estimation generally requires bias correction terms.
Furthermore, we proved that the vanishing bias correction term implies the possibility of minimizing latent bias caused by the large proportion of outliers.




\bibliographystyle{myIEEEbib}
\bibliography{Refj}

\begin{thebibliography}{10}

\bibitem{robust}
F.~R. Hampel, E.~M. Ronchetti, P.~J. Rousseeuw, and W.~A. Stahel,
\newblock {\em Robust Statistics: The Approach Based on Influence Functions},
\newblock John Wiley \& Sons, 2005.

\bibitem{huberbook}
P.~J. Huber and E.~M. Ronchetti,
\newblock {\em Robust Statistics},
\newblock John Wiley \& Sons, second edition, 2009.

\bibitem{beta-div}
A.~Basu, I.~R. Harris, N.~L. Hjort, and M.~C. Jones,
\newblock ``{Robust and efficient estimation by minimising a density power
  divergence},''
\newblock {\em Biometrika}, vol. 85, no. 3, pp. 549--559, 1998.

\bibitem{u-boost}
N.~Murata, T.~Takenouchi, T.~Kanamori, and S.~Eguchi,
\newblock ``Information geometry of ${U}$-boost and {B}regman divergence,''
\newblock {\em Neural Computation}, vol. 16, no. 7, pp. 1437--1481, 2004.

\bibitem{psi-div}
S.~Eguchi and Y.~Kano,
\newblock ``Robustifing maximum likelihood estimation by psi-divergence,''
\newblock {ISM} {R}esearch {M}emo 802, Institute of Statistical Mathematics,
  2001.

\bibitem{bexp-div}
T.~Mukherjee, A.~Mandal, and A.~Basu,
\newblock ``The {B}-exponential divergence and its generalizations with
  applications to parametric estimation,''
\newblock {\em Statistical Methods {\&} Applications}, vol. 28, no. 2, pp.
  241--257, 2019.

\bibitem{windham}
M.~P. Windham,
\newblock ``Robustifying model fitting,''
\newblock {\em Journal of the Royal Statistical Society: Series B
  (Methodological)}, vol. 57, no. 3, pp. 599--609, 1995.

\bibitem{type0-div}
M.~C. Jones, N.~L. Hjort, I.~R. Harris, and A.~Basu,
\newblock ``{A comparison of related density-based minimum divergence
  estimators},''
\newblock {\em Biometrika}, vol. 88, no. 3, pp. 865--873, 2001.

\bibitem{gamma-div}
H.~Fujisawa and S.~Eguchi,
\newblock ``Robust parameter estimation with a small bias against heavy
  contamination,''
\newblock {\em Journal of Multivariate Analysis}, vol. 99, no. 9, pp.
  2053--2081, 2008.

\bibitem{fujisawa2013}
H.~Fujisawa,
\newblock ``Normalized estimating equation for robust parameter estimation,''
\newblock {\em Electron. J. Statist.}, vol. 7, pp. 1587--1606, 2013.

\bibitem{f-separable}
Y.~Shkel and S.~Verd\'u,
\newblock ``A coding theorem for f-separable distortion measures,''
\newblock {\em Entropy}, vol. 20, no. 2, pp. 1--16, 2018.

\bibitem{mypaper3}
M.~Kobayashi and K.~Watanabe,
\newblock ``Generalized {D}irichlet-process-means for $f$-separable distortion
  measures,''
\newblock {\em Neurocomputing}, to appear.

\bibitem{deviance-based}
R.~V. Lenth and P.~J. Green,
\newblock ``Consistency of deviance-based {M}-estimators,''
\newblock {\em Journal of the Royal Statistical Society: Series B
  (Methodological)}, vol. 49, no. 3, pp. 326--330, 1987.

\bibitem{bianco2005}
A.~M. Bianco, M.~G. Ben, and V.~J. Yohai,
\newblock ``Robust estimation for linear regression with asymmetric errors,''
\newblock {\em Canadian Journal of Statistics}, vol. 33, no. 4, pp. 511--528,
  2005.

\bibitem{clustering-bregman}
A.~Banerjee, S.~Merugu, I.~S. Dhillon, and J.~Ghosh,
\newblock ``Clustering with {B}regman divergences,''
\newblock {\em Journal of Machine Learning Research}, vol. 6, no. Oct, pp.
  1705--1749, 2005.

\bibitem{elliptically}
S.~Cambanis, S.~Huang, and G.~Simons,
\newblock ``On the theory of elliptically contoured distributions,''
\newblock {\em Journal of Multivariate Analysis}, vol. 11, no. 3, pp. 368--385,
  1981.

\bibitem{bridge-div}
A.~K. Kuchibhotla, S.~Mukherjee, and A.~Basu,
\newblock ``Statistical inference based on bridge divergences,''
\newblock {\em Annals of the Institute of Statistical Mathematics}, vol. 71,
  no. 3, pp. 627--656, 2019.

\bibitem{asympt}
A.~W. van~der Vaart,
\newblock {\em Asymptotic Statistics},
\newblock Cambridge University Press, 1998.

\end{thebibliography}


\appendix
\section{Proof of Theorems}
\subsection{Proof of Theorem \ref{thm:mahalanobis} \label{sec:proof1}}
We assume that eigenvalue decomposition with respect to positive definite matrix $\bm{A}$.
That is, $\bm{A}=\bm{V}\bm{\Lambda}\bm{V}^{\rm T}$, where $\bm{V}^{-1}=\bm{V}^{\rm T}$ and $\bm{\Lambda}$ is the diagonal matrix with eigenvalues.
Then, Mahalanobis distance is rewritten by
\begin{align*}
\begin{split}
&\left(\bm{x}-\bm{\theta}\right)^{\rm T}\bm{A}\left(\bm{x}-\bm{\theta}\right) 
\\={}&\left(\bm{x}-\bm{\theta}\right)^{\rm T}\bm{V}\bm{\Lambda}\bm{V}^{\rm T}\left(\bm{x}-\bm{\theta}\right) 
\\={}&\bm{y}^{\rm T}\bm{\Lambda}\bm{y} 
=\sum_{j=1}^d \lambda_j y_j^2,
\end{split}
\end{align*}
where $\bm{y}=\bm{V}^{\rm T}\left(\bm{x}-\bm{\theta}\right)$ and $\lambda_j$ is the $j$-th element of the diagonal matrix $\bm{\Lambda}$.
Further, we assume that rank factorization with respect to positive definite matrix $\bm{\Lambda}$.
That is, $\bm{\Lambda}=\sqrt{\bm{\Lambda}}^{\rm T} \sqrt{\bm{\Lambda}}$.
If the random vector $\bm{Y}$ follow  $\bm{Y}\sim\frac{1}{C}g(\bm{Y}^{\rm T}\bm{A}\bm{Y})$, then it can be decomposed as $\bm{Y}=R\bm{U}\sqrt{\bm{\Lambda}}$,
where random variable $R$ satisfies $R\geq 0$ and $d$-dimensional random vector $\bm{U}$ is uniformly distributed on the unit sphere surface \cite{elliptically}.
Then, $\mathbb{E}[\bm{U}]=\bm{0}$ holds.
From \eqref{eq:bias0}, ignoring the normalization constant $C$, we have
\begin{align*}
\begin{split}
&\int_{\mathbb{R}^d} g\left(d_{\rm Mah.}(\bm{x}, \bm{\theta})  \right)f'\left(d_{\rm Mah.}(\bm{x}, \bm{\theta})  \right)(\bm{x}-\bm{\theta})d\bm{x}
\\={}&\int_{\mathbb{R}^d} g\left(\sum_{j=1}^d \lambda_j y_j^2\right)f'\left(\sum_{j=1}^d \lambda_j y_j^2\right)\bm{V}\bm{y}\abs{\bm{V}}d\bm{y}
\\={}&\abs{\bm{V}}\bm{V} \left[\prod_{j=1}^d \frac{1}{\sqrt{\lambda_j}}\right] \underbrace{\mathbb{E}[\bm{U}]}_{\bm{0}}\int_0^{\infty} g(r^2)f'(r^2)r^d dr 
\\={}&\bm{0}.
\end{split}
\end{align*}
Therefore, if the following integral exists, then the unbiased estimation equation holds without any bias correction term
\begin{align*}
\int_0^{\infty} g(r^2)f'(r^2)r^d dr \\
=\int_0^{\infty} g(t)f'(t)t^{\frac{d-1}{2}} dt,
\end{align*}
where we used integration by substitution as $t=r^2$.
\qed

\subsection{Proof of Theorem \ref{thm:itakura} \label{sec:proof2}}
From \eqref{eq:bias0}, ignoring the normalization constant $C$, we have
\begin{align*}
\begin{split}
&\int_0^{\infty} \frac{1}{x}g(d_{\rm IS}(x,\theta))f'(d_{\rm IS}(x,\theta))(x-\theta)dx
\\={}&\int_0^\theta \frac{1}{x}g(d_{\rm IS}(x,\theta))f'(d_{\rm IS}(x,\theta))(x-\theta)dx
\\&+\int_\theta^{\infty} \frac{1}{x}g(d_{\rm IS}(x,\theta))f'(d_{\rm IS}(x,\theta))(x-\theta)dx
\\={}&\theta \int_\infty^0 g(t)f'(t)dt + \theta \int_0^\infty g(t)f'(t)dt=0.
\end{split}
\end{align*}
We used integration by substitution as $t=d_{\rm IS}(x,\theta)$.
Therefore, if the following integral exists, then the unbiased estimation equation holds without any bias correction term
\begin{align*}
\int_0^\infty g(t)f'(t)dt <\infty. 
\end{align*}
\qed

\subsection{Proof of Theorem \ref{thm:latent_bias} \label{sec:proof3}}
We take the expectation of the objective function \eqref{eq:f-separable} by $\tilde{p}(\bm{x})=(1-\varepsilon)p(\bm{x}|\bm{\theta}^*)+\varepsilon c(\bm{x})$ as $\int \tilde{p}(\bm{x})f\left(d_\phi\left(\bm{x}, \bm{\theta}\right)\right)d\bm{x}$.
We have
\begin{align*}
\begin{split}
&\int \tilde{p}(\bm{x})f\left(d_\phi\left(\bm{x}, \bm{\theta}\right)\right)d\bm{x}
\\={}&(1-\varepsilon)\int p(\bm{x}|\bm{\theta}^*)f\left(d_\phi\left(\bm{x}, \bm{\theta}\right)\right)d\bm{x}+
\varepsilon \int  c(\bm{x})f\left(d_\phi\left(\bm{x}, \bm{\theta}\right)\right)d\bm{x}
\\={}&(1-\varepsilon)\int p(\bm{x}|\bm{\theta}^*)f\left(d_\phi\left(\bm{x}, \bm{\theta}\right)\right)d\bm{x}+O(\varepsilon\nu_{d_\phi}).
\end{split}
\end{align*}
Here, we consider $\varepsilon\nu_{d_\phi}= \int  c(\bm{x})f\left(d_\phi\left(\bm{x}, \bm{\theta}\right)\right)d\bm{x}$ as $\bm{\theta}\approx\bm{\theta}^*$.
From Assumptions \ref{ass:f}, \ref{ass:select_f}, we can ignore $O(\varepsilon\nu_{d_\phi})$,
\begin{align*}
\begin{split}
\tilde{\bm{\theta}}&=\argmin_{\bm{\theta}}\int \tilde{p}(\bm{x})f\left(d_\phi\left(\bm{x}, \bm{\theta}\right)\right)d\bm{x} \\
&= \argmin_{\bm{\theta}}\int p(\bm{x}|\bm{\theta}^*)f\left(d_\phi\left(\bm{x}, \bm{\theta}\right)\right)d\bm{x}=\bm{\theta}^* ,
\end{split}
\end{align*}
where we have used Assumption \ref{ass:cons}.
Therefore, the latent bias can be made sufficiently small, that is, $\tilde{\bm{\theta}}-\bm{\theta}^* \approx \bm{0}$ by adjusting the function $f$.
\qed

\subsection{Detail of IS distribution \label{sec:is_dist}}
In Section \ref{sec:itakura}, we defined a new distribution and named the IS distribution because it is characterized by the IS distance.
In this appendix, we explain properties of IS distribution.
IS distribution is defined by
\begin{align*}
p(x|\theta)=\frac{1}{C}\frac{1}{x}g(d_{\rm IS}(x,\theta)),
\end{align*}
where $C$ is the normalization constant.
The normalization constant $C$ is expressed as follows without depending on $\theta$,
\begin{align*}
C=\int_{0}^\infty \frac{1}{x}g(d_{\rm IS}(x,\theta))dx
=\int_{0}^\infty \frac{1}{t}g(d_{\rm IS}(t,1))dt.
\end{align*}
We used integration by substitution $t=x/\theta$.
Specifically, if the expected value exists, $\mathbb{E}[X]<\infty$, $\mathbb{E}[X]=\theta$ holds from the estimation equation \eqref{eq:bias0} with $f(z)=z$.
The condition for the unbiased estimation equation is given by \eqref{eq:is_bias0}.
This condition with $f(z)=z$ reduces to $\int_{0}^\infty g(t)dt<\infty$, that is, $g\in L^1(\mathbb{R}_+)$.
Thus, the following relation holds with respect to the expectation and the function $g$,
\begin{align}
    g\in L^1(\mathbb{R}_+) \Leftrightarrow \mathbb{E}[X]=\theta. \label{eq:g_expect}
\end{align}
In other words, the existence of the expectation depends only on the function $g$.
This property holds in the general continuous Bregman distribution described later.
Then, the normalization constant is expressed as 
\begin{align}
C=\int_{0}^\infty g(d_{\rm IS}(x,1))dx. \label{eq:const}
\end{align}
Because we have
\begin{align*}
\begin{split}
&\theta=\mathbb{E}[X]=\int_0^\infty \frac{1}{C}\frac{1}{x}g(d_{\rm IS}(x,\theta))x dx\\
{}&=\frac{1}{C}\int_0^\infty g(d_{\rm IS}(x,\theta)) dx=\theta \frac{1}{C}\int_0^\infty g(d_{\rm IS}(x,1)) dx,
\end{split}
\end{align*}
the normalization constant $C$ must satisfy \eqref{eq:const}.
\subsubsection{Example: Gamma distribution}
When we choose the function $g(z)=\exp(-kz)$, IS distribution becomes the gamma distribution with the known shape parameter $k>0$.
Then, $\frac{1}{x}g(d_{\rm IS}(x,\theta))$ is expressed as
\begin{align*}
\begin{split}
&\frac{1}{x}g(d_{\rm IS}(x,\theta)) = \frac{1}{x}\exp\left(-kd_{\rm IS}(x,\theta)\right)
\\={}&\frac{1}{x}\exp(-\frac{k}{\theta}x)\left(\frac{{e}}{\theta}\right)^k x^k 
=\left(\frac{{e}}{\theta}\right)^k x^{k-1} \exp\left(-\frac{k}{\theta}x\right).
\end{split}
\end{align*}
The normalization constant $C$ is given by
\begin{align*}
\begin{split}
&C = \int_0^\infty \frac{1}{x}\exp\left(-kd_{\rm IS}(x,\theta)\right) dx
\\={}&\left(\frac{{e}}{\theta}\right)^k \int_0^\infty x^{k-1} \exp\left(-\frac{k}{\theta}x\right)dx
\\={}&\left(\frac{{e}}{\theta}\right)^k \left(\frac{\theta}{k}\right)^k \Gamma(k)
\\={}&\left(\frac{{e}}{k}\right) \Gamma(k),
\end{split}
\end{align*}
where $\Gamma(\cdot)$ is the gamma function.
Therefore, the gamma distribution is obtained
\begin{align}
\begin{split}
&p(x|\theta) = \frac{1}{C}\frac{1}{x}\exp(-kd_{\rm IS}(x,\theta))
\\={}&\left(\frac{k}{{e}}\right)^k\frac{1}{\Gamma(k)} \left(\frac{{e}}{\theta}\right)^k x^{k-1} \exp\left(-\frac{k}{\theta}x\right)
\\={}&\left(\frac{k}{\theta}\right)^k\frac{1}{\Gamma(k)}  x^{k-1} \exp\left(-\frac{k}{\theta}x\right). \label{eq:gamma_dis}
\end{split}
\end{align}
The gamma distribution is also expressed as
\begin{align*}
p(x|\beta, k)=\frac{x^{k-1}}{\Gamma(k)\beta^k}\exp\left(-\frac{x}{\beta}\right).
\end{align*}
The parameters $\beta$ and $k$ are called scale and shape parameters, respectively.
This model is corresponding to \eqref{eq:gamma_dis} by the transformation $\theta = k\beta$.
Notice that the parameter $\theta$ is also the scale parameter and the expectation parameter.

\subsection{Detail of Continuous Bregman distribution \label{sec:CBregman_dist}}
\begin{definition}[Continuous Bregman distribution]
For $x\in (a,b)\subseteq \mathbb{R}$, the parameter $\theta\in\Theta\subseteq\mathbb{R}$, and the function $g: \mathbb{R}_+\to\mathbb{R}_+$, we define the following probability density function with the normalization constant satisfying $C(\theta)<\infty$,
\begin{align}
p(x|\theta)=\frac{1}{C(\theta)}\frac{\phi'(x)-\phi'(\theta)}{x-\theta}g(d_{\phi}(x,\theta)). \label{eq:CBregman_dist}
\end{align}
\end{definition}
Specifically, if \eqref{eq:cond1} holds and the expected value exists, $\mathbb{E}[X]<\infty$, $\mathbb{E}[X]=\theta$ holds from the estimation equation \eqref{eq:bias0} with $f(z)=z$ and the condition for it is given by \eqref{eq:CBregman_bias0}.
For the same reason, the relationship \eqref{eq:g_expect} holds for the expectation and the function $g$ as in the case of the IS distribution.
Note that the existence of the expectation depends only on the function $g$ regardless of the choice of Bregman divergence as long as the normalization constant $C(\theta)$ exists and \eqref{eq:cond1} holds.
Note that in general, the normalization constant $C(\theta)$ depends on the parameter $\theta$.
\begin{assumption}
\begin{enumerate}
\item Bregman divergence satisfies the following for any $\theta$ and a positive constant $\zeta$ (including $\infty$):
\begin{align}
\lim_{x\to a}d_\phi(x,\theta)=\lim_{x\to b}d_\phi(x,\theta)=\zeta. \label{eq:cond1}
\end{align}
\item Bregman divergence used for estimation is corresponding to that of the model \eqref{eq:CBregman_dist}.
\end{enumerate}
\end{assumption}
Under these assumptions, the unbiased estimation equation \eqref{eq:bias0} holds.
\begin{theorem}
\label{thm:CBregman}
If the following condition holds against the combination of the function $f$ and statistical model \eqref{eq:CBregman_dist}, the estimation equation holds without a bias correction term:
\begin{align}
\int_0^\infty g(t)f'(t)dt <\infty. \label{eq:CBregman_bias0}
\end{align}
\end{theorem}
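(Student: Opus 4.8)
The plan is to mimic the substitution argument used in the proof of Theorem \ref{thm:itakura}, which is the special case $\phi(x)=-\log x$, $(a,b)=(0,\infty)$. First I would write out the left-hand side of the unbiasedness condition \eqref{eq:bias0} for the model \eqref{eq:CBregman_dist}: ignoring the normalization constant $C(\theta)$, it reads
\begin{align*}
\int_a^b \frac{\phi'(x)-\phi'(\theta)}{x-\theta}\,g(d_\phi(x,\theta))\,f'(d_\phi(x,\theta))\,(x-\theta)\,dx
=\int_a^b (\phi'(x)-\phi'(\theta))\,g(d_\phi(x,\theta))\,f'(d_\phi(x,\theta))\,dx,
\end{align*}
so the troublesome factor $1/(x-\theta)$ from the density cancels the $(x-\theta)$ coming from the score $\partial_\theta d_\phi$, exactly as in Theorem \ref{thm:itakura}. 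The next step is to recognize that $\phi'(x)-\phi'(\theta)=\partial_x d_\phi(x,\theta)$, since $d_\phi(x,\theta)=\phi(x)-\phi(\theta)-\phi'(\theta)(x-\theta)$ differentiated in $x$ gives precisely $\phi'(x)-\phi'(\theta)$. Hence the integrand is $\big(\partial_x d_\phi(x,\theta)\big)\,(g f')(d_\phi(x,\theta))$, which is an exact derivative in $x$: if $G$ is an antiderivative of $g f'$, the integrand equals $\frac{d}{dx}G(d_\phi(x,\theta))$.

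Then I would split the integral at $x=\theta$, where $d_\phi(\theta,\theta)=0$, and use the fundamental theorem of calculus on each piece. On $(a,\theta)$ the map $x\mapsto d_\phi(x,\theta)$ decreases from $\zeta$ (the common limit in \eqref{eq:cond1}) down to $0$; on $(\theta,b)$ it increases from $0$ back up to $\zeta$. Therefore
\begin{align*}
\int_a^\theta \frac{d}{dx}G(d_\phi(x,\theta))\,dx + \int_\theta^b \frac{d}{dx}G(d_\phi(x,\theta))\,dx
= \big(G(0)-G(\zeta)\big) + \big(G(\zeta)-G(0)\big) = 0,
\end{align*}
which is precisely \eqref{eq:bias0}. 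The role of Assumption \eqref{eq:cond1} is exactly to guarantee that the two boundary contributions $G(\zeta)$ cancel rather than producing different limiting values; the second assumption (that the Bregman divergence used for estimation matches that of the model) is what makes the cancellation of $1/(x-\theta)$ against $(x-\theta)$ possible in the first line. Finally, for the above manipulations — changing variables $t=d_\phi(x,\theta)$ and invoking the fundamental theorem of calculus — to be legitimate one needs $g f'$ to be integrable on $(0,\zeta)$, and since $\zeta\le\infty$ a sufficient condition is $\int_0^\infty g(t)f'(t)\,dt<\infty$, which is \eqref{eq:CBregman_bias0}.

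The main obstacle is the rigorous justification of the integrability/absolute-convergence step: one must ensure the integral $\int_a^b(\phi'(x)-\phi'(\theta))g(d_\phi)f'(d_\phi)\,dx$ converges absolutely so that splitting at $\theta$ and applying the fundamental theorem of calculus (equivalently, the substitution $t=d_\phi(x,\theta)$ on each monotone branch) is valid; this is where condition \eqref{eq:CBregman_bias0} enters, together with the standing hypothesis that the normalization constant $C(\theta)$ is finite. A minor subtlety worth a remark is that when $\zeta=\infty$ one should read $G(\zeta)$ as $\lim_{t\to\infty}G(t)$, which exists and is finite precisely because $gf'\in L^1(\mathbb{R}_+)$; once that limit exists the telescoping cancellation goes through unchanged. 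Everything else is the routine chain-rule and change-of-variables bookkeeping already carried out in the proof of Theorem \ref{thm:itakura}.
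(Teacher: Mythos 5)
Your proposal is correct and follows essentially the same route as the paper: cancel the factor $(x-\theta)$ against $1/(x-\theta)$, split the integral at $x=\theta$, and use the monotone substitution $t=d_\phi(x,\theta)$ (your fundamental-theorem-of-calculus formulation with the antiderivative $G$ of $gf'$ is the same computation), with \eqref{eq:cond1} ensuring the two boundary contributions at $\zeta$ cancel. Your added remarks on absolute convergence and on reading $G(\zeta)$ as a limit when $\zeta=\infty$ are sound refinements of the paper's argument, not a different approach.
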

\begin{proof}
From \eqref{eq:bias0}, ignoring the normalization constant $C(\theta)$, we have
\begin{align*}
\begin{split}
&\int_\mathbb{R} \frac{\phi'(x)-\phi'(\theta)}{x-\theta}g(d_{\phi}(x,\theta))f'(d_{\phi}(x,\theta)) (x-\theta) dx
\\={}&\int_a^\theta {(\phi'(x)-\phi'(\theta))}g(d_{\phi}(x,\theta))f'(d_{\phi}(x,\theta)) dx
\\& + \int_\theta^b {(\phi'(x)-\phi'(\theta))}g(d_{\phi}(x,\theta))f'(d_{\phi}(x,\theta)) dx
\\={}&\int_{\zeta}^0 g(t)f'(t) dt + \int_0^\zeta g(t)f'(t) dt = 0.
\end{split}
\end{align*}
We used integration by substitution as $t=d_\phi(x,\theta)$ and \eqref{eq:cond1}.
Therefore, if integral \eqref{eq:CBregman_bias0} exists, then the unbiased estimation equation holds without any bias correction term.
\qed
\end{proof}
The following models are the examples of the continuous Bregman distribution.
\subsubsection{symmetric (one dimensional elliptical) distribution}
We set $\phi(x)=x^2$. Then, \eqref{eq:CBregman_dist} becomes the symmetric (one dimensional elliptical) distribution as follows:
\begin{align*}
p(x|\theta) = \frac{1}{C}g((x-\theta)^2).
\end{align*}
\subsubsection{IS distribution}
We set $\phi(x)=-\log x$. Then, \eqref{eq:CBregman_dist} becomes the IS distribution as follows:
\begin{align*}
p(x|\theta)=\frac{1}{C}\frac{1}{x}g(d_{\rm IS}(x,\theta)).
\end{align*}

Finally, we consider the relation between the continuous Bregman distribution \eqref{eq:CBregman_dist} and the regular exponential family \eqref{eq:exp_fam}.
Let $g(z)=\exp(-z)$. 
If the factor
\[
\frac{1}{C(\theta)}\frac{\phi'(x)-\phi'(\theta)}{x-\theta}
\]
does not depend on the parameter $\theta$,
\eqref{eq:CBregman_dist} becomes one dimensional regular exponential family as follows:
\begin{align*}
p(x|\theta)=r_{\phi}(x)\exp(-d_{\phi}(x, \theta)),
\end{align*}
where $r_\phi(x)$ is uniquely determined by the strictly convex function $\phi$ \cite{clustering-bregman}.
The Gaussian and gamma distributions provide rare examples included in both the class of continuous Bregman distributions and the regular exponential family.

\end{document}